\newtheorem{prop}{Proposition}
\begin{document}

\title{Variational Bayesian Dropout with a Hierarchical Prior}

\author{Yuhang Liu$^{1,2}$, Wenyong Dong$^{1}$, Lei Zhang$^{3,2}$, Dong Gong$^{2}$, and Qinfeng Shi$^{2}$\\
$^{1}$School of Computer Science, Wuhan University, Hubei, China\\
$^{2}$The University of Adelaide, Australia~~~
$^{3}$Inception Institute of Artificial Intelligence, UAE\\
{\color{red}https://sites.google.com/view/yuhangliu/pro}
}

\maketitle

\begin{abstract}
Variational dropout (VD) is a generalization of Gaussian dropout, which aims at inferring the posterior of network weights based on a log-uniform prior on them to learn these weights as well as dropout rate simultaneously. The log-uniform prior not only interprets the regularization capacity of Gaussian dropout in network training, but also underpins the inference of such posterior. However, the log-uniform prior is an improper prior (i.e., its integral is infinite), which causes the inference of posterior to be ill-posed, thus restricting the regularization performance of VD. To address this problem, we present a new generalization of Gaussian dropout, termed variational Bayesian dropout (VBD), which turns to exploit a hierarchical prior on the network weights and infer a new joint posterior. Specifically, we implement the hierarchical prior as a zero-mean Gaussian distribution with variance sampled from a uniform hyper-prior. Then, we incorporate such a prior into inferring the joint posterior over network weights and the variance in the hierarchical prior, with which both the network training and dropout rate estimation can be cast into a joint optimization problem. More importantly, the hierarchical prior is a proper prior which enables the inference of posterior to be well-posed. In addition, we further show that the proposed VBD can be seamlessly applied to network compression. Experiments on classification and network compression demonstrate the superior performance of the proposed VBD in regularizing network training.
\end{abstract}

\section{Introduction}
\label{sec:int}
Deep neural networks have gained great success in various artificial intelligence research areas, e.g., computer vision \cite{he2016deep,gong2017motion}, natural language processing \cite{collobert2008unified}, etc. Nevertheless, due to the limited samples with annotation in practice, training deep neural networks with extensive parameters often suffers from over-fitting problem~\cite{zhang2016understanding}.
{\textit{Dropout}} proves to be a practical technique to alleviate this problem, which stochastically regularizes network weights by randomly enforcing multiplicative noise on input features during training \cite{hinton2012improving}. Over the past several years, various dropout methods have been proposed
~\cite{hinton2012improving,srivastava2014dropout,wang2013fast}.  Among them, {\textit{Gaussian dropout}}~\cite{wang2013fast} provides a general framework, which introduces the distribution of network weights into model training and thus can well approximate the conventional dropout with different types of noise, such as binary noise~\cite{hinton2012improving} or Gaussian noise~\cite{srivastava2014dropout}. While these methods have shown promising regularization performance in various deep network architectures~\cite{krizhevsky2012imagenet,simonyan2014very,huang2017densely,kendall2015bayesian,goodfellow2016deep}, the reason behind such success is not clear, and their performance heavily depends on a predefined dropout rate, for which traditional grid-search based methods is a prohibitive operation for large network models.

\begin{figure}
\center
\includegraphics[width=3.3in,height=0.75in]{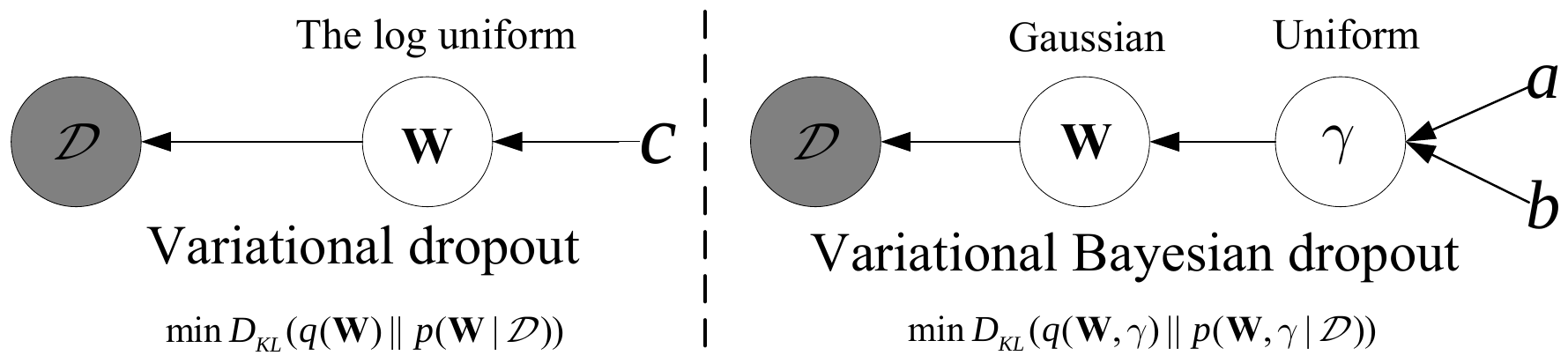}
\caption{Variational dropout v.s. the proposed variational Bayesian dropout.}
  \label{VBD}
\vspace{-0.3cm}
\end{figure}

{\textit{Variational dropout}} (VD) \cite{kingma2015variational} is a generalization of Gaussian dropout, which focuses on inferring the posterior of network weights based on a log-uniform prior.
By doing this, VD can address these two aspects of the problems mentioned above. 1) {\textit{Bayesian Interpretation}}. It has been proved~\cite{kingma2015variational} that VD can be consistent with Gaussian dropout for a fixed dropout rate by enforcing the log-uniform prior on network weights. This implies that incorporating Gaussian dropout into network training amounts to variational inference on the network weights, where these weights are regularized by the Kullback-Leibler (KL) divergence between the variational posterior, i.e., the distribution of network weights introduced by Gaussian dropout, and the log-uniform prior. In other words, the log-uniform prior endows Gaussian dropout with the regularization capacity.
2) {\textit{Adaptive dropout rate}}. Based on the log-uniform prior, VD~\cite{kingma2015variational} can simultaneously learn network weights as well as dropout rate via inferring the posterior on these weights.
To sum up, the log-uniform prior as the footstone of VD underpins these two advantages above. However, recent theoretical progress in~\cite{hron2018variational,hron2017variational,neklyudov2017structured} demonstrates that the log-uniform prior is an improper prior (\ie, its integral is infinite), which causes inferring the posterior of network weights to be ill-posed. Such ill-posed inference can degenerate variational inference on these weights into penalized maximum likelihood estimation~\cite{hron2018variational}. Thus, the interpretation of the regularization capacity of Gaussian dropout is not in a full Bayesian way. And more importantly, the regularization capacity of VD is still limited.

To address this problem, we propose a variational Bayesian dropout (VBD) framework, which is a new generalization of Gaussian dropout. A visual comparison between the proposed VBD and VD can be seen in Figure~\ref{VBD}. In VBD, we assume network weights to come from a two-level hierarchical prior. Instead of only inferring the posterior over network weights, we propose to infer the joint posterior over both network weights and their hyper-parameters defined in their first-level prior. Through implementing the hierarchical prior as a zero-mean Gaussian distribution with variance sampled from a uniform distribution, we can theoretically prove that the proposed VBD can be consistent with Gaussian dropout~\cite{wang2013fast} for a fixed dropout rate as VD. Thus, VBD also can interpret the regularization capacity of Gaussian dropout.
In contrast to the improper log-uniform prior, the proposed hierarchical prior is a proper prior, which enables the inference of posterior in VBD well-posed. This not only leads to a full Bayesian justification for Gaussian dropout, but also improves the regularization capacity obviously. In addition, we further find that the proposed VBD can be seamlessly applied to neural network compression as~\cite{molchanov2017variational,neklyudov2017structured}. Experimental results on classification as well as network compression tasks show the effectiveness of VBD in handling over-fitting.
\section{Related work}
{\textbf{Dropout}}. Dropout plays an important role in improving the generalization capacity of deep neural networks. At first, dropout is employed to randomly drop input features with Bernoulli distribution during training to prevent feature co-adaptation \cite{hinton2012improving}. This amounts to training an exponential number of different networks with shared parameters. In the test phase, the prediction is determined by averaging the outputs of all these different networks. The idea of dropout is then generalized by multiplying the input features with random noise drawn from other distributions, e.g., Gaussian \cite{srivastava2014dropout}. While these early methods have shown effectiveness in some cases, repeatedly dropping a random subset of input features makes training a network much slower. To address this problem, Gaussian dropout \cite{wang2013fast} proposes to sample the output features from a Gaussian distribution instead of input features in dropout training and shows virtually identical regularization performance but faster convergence. This is inspired by the observation that enforcing multiplicative noise on input features, whatever the noise is generated from a Bernoulli distribution or a Gaussian, making use of the central limit theorem, makes the corresponding outputs to be approximately Gaussian~\cite{wang2013fast}. However, these conventional dropout methods fail to clarify the intrinsic principle for their regularization capacity. In addition, their performance depends a lot on the pre-defined dropout rate. In contrast, the proposed VBD can provide a Bayesian interpretation for dropout as well as automatically estimating the dropout rate.

{\textbf{Variational Dropout}}. VD is a generalization of Gaussian dropout, which can interpret the regularization capacity of dropout as well as automatically estimating the dropout rate via inferring the posterior of network weights. For example, literature in~\cite{kingma2015variational} proves that training network with variational dropout framework implicitly imposes the log-uniform prior on weights for preventing over-fitting. Since the dropout rate can be automatically determined, some works of literature ~\cite{molchanov2017variational,neklyudov2017structured} further apply VD to compress neural networks. However, the log-uniform prior is an improper prior which causes the inference of posterior over network weights in VD to be ill-posed~\cite{hron2017variational,neklyudov2017structured}, thus limiting its performance in preventing over-fitting. In this study, the proposed VBD imposes a proper hierarchical prior on network weights, which induces a well-posed Bayesian inference over network weights and thus improves the regularization capacity.

{\textbf{Concrete Dropout and Adversarial Dropout}}. In addition, recent works have proposed another two dropout methods, \eg, concrete dropout~\cite{gal2017concrete} and adversarial dropout~\cite{AAAI1816322}. They are different from the proposed VBD. Specifically, concrete dropout provides Bayesian generalization for dropout with Bernoulli distribution \cite{hinton2012improving}, while the proposed VBD provides Bayesian generalization for Gaussian dropout \cite{wang2013fast}. Besides, adversarial dropout \cite{AAAI1816322} proposes to handle over-fitting by training the network in an adversarial way. In contrast, the proposed method focuses on introducing hierarchical prior on network weights to regularize the network training.

\section{Preliminaries}
Consider a supervised learning problem on a dataset ${\cal D} = {\{ ({ {\mathbf x}_i},{ {\mathbf y}_i})\}}_{i = 1}^N$ of observation-label pairs. We train a fully connected neural network with $L$ hidden layers. For each layer, we have:
\begin{equation}\label{nn}
  {\mathbf{B}} = {{\mathbf A}}{{\mathbf \theta}},
\end{equation}
where ${{\mathbf A}}$ denotes the $M \times K$ matrix of input features for current minibatch, ${{\mathbf \theta}}$ is the $K \times D$ weight matrix, $\mathbf{B}$ is the $M \times D$ output matrix before activation function.

\subsection{Gaussian Dropout}
To prevent over-fitting, dropout applies multiplicative noise on the input of each layer of neural networks during training as follows:
\begin{equation}\label{gd}
  {\mathbf{B}} = ({{\mathbf A} \circ {\mathbf \xi} }){{\mathbf \theta}},
\end{equation}
where ${\mathbf \xi}$ is the $M \times K$ noisy matrix, and $\circ$ denotes the element-wise (Hadamard) product. In conventional dropout methods, the elements of the noise $\xi$ are either sampled from a Bernoulli distribution with probability $1-p$ of being 1, with the {\textit{dropout rate}} $p$ \cite{hinton2012improving}, or sampled from a Gaussian distribution with mean 1 and variance $\alpha ={p}/(1 - p)$~\cite{srivastava2014dropout}. Regardless which strategy of the above two is used, according to the central limit theorem and equation Eq.~\eqref{gd}, one can directly produce $B_{m,d}$ by sampling from the following Gaussian distribution:
\begin{equation}\label{gd1}
q({B}_{m,d}|\mathbf{A}) = {\cal N}({B}_{m,d}| {\mu_{m,d}},{\delta^2_{m,d}}),
\end{equation}
where mean {\small {${\mu_{m,d}} = \sum\nolimits_{k = 1}^K {{A_{m,k}}{\theta_{k,d}}}$ and variance ${\delta^2_{m,d}} = \alpha \sum\nolimits_{k = 1}^K {A^2_{m,k}\theta^2_{k,d}}$}}. Here $A_{m,k}$ denotes an element in $\mathbf{A}$. This means ${q }(\mathbf W) $ can be factorized as follows:
\begin{equation}\label{gd2}
{q }(\mathbf W) = \prod\limits_{k = 1}^K {\prod\limits_{d = 1}^D   q(W_{k,d})} = \prod\limits_{k = 1}^K {\prod\limits_{d = 1}^D {\cal N}(W_{k,d}|\theta_{k,d},{\alpha \theta^2_{k,d}})},
\end{equation}
where each element $W_{k,d}$ in $\mathbf{W}$ can be sampled from ${q }(\mathbf W_{k,d}) $ in Eq.~\eqref{gd2}. Finally, the objective function for network training with Gaussian dropout becomes:
\begin{equation}\label{vd1}
\mathop {\max }\limits_{\theta} {L_{\cal D}}(\theta ) \simeq \mathop {\max }\limits_{\theta} \sum\limits_{i = 1}^N {\mathbb{E}_{q(\mathbf W)}} {\log p({{\mathbf y}_i}|{{\mathbf x}_i}, {\mathbf W} )}.
\end{equation}
The viewpoint in Eq.~\eqref{vd1} provides a opportunity to bridge the gap between Bayesian inference and dropout, if we use Eq.~\eqref{gd2} as an approximate posterior distribution for a network model with a special prior on the weights. The challenge in this gap is what is the special prior.

\subsection{Variational Dropout}
VD uses $q(\mathbf W)$ in Eq.~\eqref{gd2} as a variational posterior to approximate the true posterior $p(\mathbf W|{\cal D})$ in terms of minimal KL divergence,
\begin{equation}\label{VarDrop}
 \min\limits_{\theta,\alpha} D_{KL}({q }(\mathbf W)||p(\mathbf W|{\cal D})).
\end{equation}
Here $D_{KL}(\cdot)$ denotes the KL divergence.
Given $\mathbf{W}$ from a prior $p(\mathbf{W})$, according to the Bayesian rule, i.e., $p(\mathbf W|{\cal D})\propto p({\cal D}|\mathbf W)p({\mathbf W})$, minimizing the KL divergence in Eq.~\eqref{VarDrop} is equivalent to maximizing the variational lower bound of the marginal likelihood of data as:
\begin{equation}\label{vd}
\mathop {\max }\limits_{\theta,\alpha} {L_{\cal D}}(\theta,\alpha ) - {D_{KL}}({q }(\mathbf W)||p(\mathbf W)),
\end{equation}
where ${L_{\cal D}}(\theta,\alpha)$ with fixed parameter $\alpha$ is the same as one in Eq. \eqref{vd1} and known as the expected log-likelihood term.

Firstly, in VD, the dropout rate $\alpha$ can be automatically determined by data characteristics. Secondly, VD can provide a Bayesian interpretation for the success of dropout in preventing over-fitting. To clarify this, VD requires that the optimization of $\theta$ in Eq.~\eqref{vd} is consistent with that in Gaussian dropout in Eq.~\eqref{vd1}, i.e., maximizing the expected log-likelihood. To this end, the prior $p(\mathbf{W})$ has to be such that ${D_{KL}}({q }(\mathbf W)||p(\mathbf W))$ in Eq. \eqref{vd} does not depend on weight parameters $\theta$.
With such a requirement, we have the following proposition.
\begin{prop}[\cite{kingma2015variational}]
\label{prop1}
The only prior in VD, which enables  ${D_{KL}}({q }(\mathbf W)||p(\mathbf W))$ not depending on weight parameters $\theta$, is the log-uniform prior:
\begin{equation}
\label{up}
p(|W_{k,d}|) = \frac{c}{{|W_{k,d}|}} \Leftrightarrow p(\log (|W_{k,d}|)) = c.
\end{equation}
\end{prop}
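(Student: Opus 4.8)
The plan is to exploit the product structure of the variational posterior. Since $q(\mathbf W)$ in Eq.~\eqref{gd2} factorizes over the index pairs $(k,d)$, and so does any factorized prior $p(\mathbf W)$, the KL divergence splits into a sum of one-dimensional terms; I would therefore reduce the whole claim to a single scalar weight $w$ with $q(w)=\mathcal N(w\,|\,\theta,\alpha\theta^2)$ and characterize all densities $p(w)$ for which $D_{KL}(q(w)\|p(w))$ is independent of $\theta$. Writing the divergence as negative entropy plus cross-entropy, $D_{KL}=\mathbb E_{q}[\log q]-\mathbb E_{q}[\log p]$, the first term is the negative differential entropy of a Gaussian of variance $\alpha\theta^2$, namely $\mathbb E_{q}[\log q]=-\tfrac12\log(2\pi e\,\alpha)-\log|\theta|$. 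Hence the only $\theta$-dependence contributed by $q$ is the single term $-\log|\theta|$, and everything reduces to understanding how the cross-entropy $-\mathbb E_{q}[\log p]$ varies with $\theta$.

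To extract that dependence I would use the reparameterization $w=\theta(1+\sqrt\alpha\,\epsilon)$ with $\epsilon\sim\mathcal N(0,1)$, so that $\mathbb E_{q}[\log p(w)]=\mathbb E_{\epsilon}[\log p(\theta(1+\sqrt\alpha\epsilon))]$ and $\partial w/\partial\theta=w/\theta$. Differentiating through the expectation and imposing $\partial_\theta D_{KL}=0$ then collapses to the clean condition
\begin{equation}
\mathbb E_{q(w)}\!\left[\,w\,(\log p)'(w)\,\right]=-1 \qquad\text{for all }\theta .
\end{equation}
The log-uniform prior $p(|w|)=c/|w|$ gives $(\log p)'(w)=-1/w$, so $w(\log p)'(w)\equiv-1$ and the identity holds trivially; a one-line change of variables $v=\log|w|$ confirms the stated equivalence $p(|w|)=c/|w|\Leftrightarrow p(\log|w|)=c$. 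What remains is the converse: that this is the \emph{only} such prior.

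The heart of the argument, and the step I expect to be the main obstacle, is deducing from the above expectation holding for \emph{every} $\theta$ that the integrand is pointwise constant, $w(\log p)'(w)=-1$, after which integrating the ODE $(\log p)'(w)=-1/w$ yields $\log p(w)=-\log|w|+\mathrm{const}$. Setting $\psi(w)=w(\log p)'(w)+1$, the condition becomes $\int \mathcal N(w\,|\,\theta,\alpha\theta^2)\,\psi(w)\,dw=0$ for all $\theta$, so I must show that this one-parameter Gaussian scale family is \emph{complete}, i.e.\ admits no nonzero $\psi$ orthogonal to all its members. I would establish this by the substitution $t=1/\theta$, which after dividing out the $\theta$-dependent normalizer turns the condition into $\int \exp(-\tfrac{w^2t^2}{2\alpha}+\tfrac{wt}{\alpha})\,\psi(w)\,dw=0$ for all $t$; expanding this real-analytic function of $t$ about $t=0$ forces every moment $\int w^n\psi(w)\,dw$ to vanish, and the Gaussian-type decay renders the associated moment problem determinate, giving $\psi\equiv0$. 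The delicate points are justifying differentiation under the integral sign and the analyticity in $t$, which requires controlling the growth of $(\log p)'$ for an arbitrary density $p$; this is precisely where the argument demands care.
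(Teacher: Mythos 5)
First, note what you are comparing against: this paper does not prove Proposition~1 at all --- it imports it from \cite{kingma2015variational}, where sufficiency is verified by direct computation and necessity is argued only briefly (essentially by matching $\mathbb{E}_{q}[\log p(w)]=-\log|\theta|+f(\alpha)$). Your reduction to a scalar weight, the entropy computation $\mathbb{E}_q[\log q]=-\tfrac12\log(2\pi e\,\alpha)-\log|\theta|$, the stationarity condition $\mathbb{E}_{q}[w(\log p)'(w)]=-1$, and the verification that the log-uniform prior satisfies it are all correct, and your ODE step is the right way to finish \emph{once} pointwise constancy of $w(\log p)'(w)$ is established. The genuine gap is exactly where you predicted, but your proposed fix for it fails as written: after substituting $t=1/\theta$, the function $F(t)=\int \exp\bigl(-\tfrac{w^2t^2}{2\alpha}+\tfrac{wt}{\alpha}\bigr)\psi(w)\,dw$ cannot be expanded about $t=0$, because at $t=0$ the Gaussian damping disappears --- $F$ need not be analytic or even defined there, and the ``moments'' $\int w^n\psi(w)\,dw$ you want to extract need not exist, since $\psi(w)=w(\log p)'(w)+1$ is only integrable against the Gaussians of the family, not against polynomials. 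Moreover, you cannot appeal to completeness as a known fact: $\{\mathcal N(\theta,\alpha\theta^2)\}$ is a \emph{curved} exponential family (its natural parameters trace the parabola $\eta_1=-\tfrac{\alpha}{2}\eta_2^2$ rather than filling an open set), and curved exponential families are not complete in general, so this is precisely the point that must be proved rather than cited.

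The argument is repairable. Since $F$ vanishes for \emph{all} $t\neq 0$, differentiate at a fixed $t_0\neq 0$ instead of at $0$: the derivatives $\partial_t^n F(t_0)$ produce integrals of $\psi$ against polynomials in $u=w-t_0w^2$ and $v=w^2$ times the fixed Gaussian factor $G_{t_0}(w)=\exp\bigl(-\tfrac{w^2t_0^2}{2\alpha}+\tfrac{wt_0}{\alpha}\bigr)$; since $w=u+t_0v$, these polynomials span all of them, so $\int w^n G_{t_0}(w)\psi(w)\,dw=0$ for every $n$. Now the Gaussian decay of $G_{t_0}\psi$ makes the moment problem determinate (Carleman's condition), forcing $G_{t_0}\psi=0$ a.e., i.e.\ $\psi\equiv 0$ --- and this works under the same growth and regularity hypotheses on $(\log p)'$ that your differentiation under the integral sign already requires (such hypotheses must be assumed in any honest statement; the proposition is informal to begin with, its ``only prior'' being itself improper). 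Alternatively, a more elementary necessity argument, closer in spirit to \cite{kingma2015variational}, avoids completeness entirely: if the KL term is $\theta$-independent for all $\alpha$, then $\mathbb{E}_q[\log p(w)]=-\log|\theta|+f(\alpha)$, and letting $\alpha\to 0$ so that $q$ concentrates at $\theta$ yields $\log p(\theta)=-\log|\theta|+\mathrm{const}$ pointwise, which is the claim.
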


The above discussion demonstrates that training network with VD implicitly imposes the log-uniform prior on weights. With such a prior, the KL term in Eq.~\eqref{vd} is able to regularize the number of significant digits stored for the weights $\mathbf{W}$ in the floating-point format, thus being able to mitigate over-fitting at some extent. However, the log-uniform prior is an improper prior, which reaches ill-posed variational inference, e.g., the KL divergence between the variational posterior in Eq.~\eqref{gd2} and the log uniform prior in Eq.~\eqref{up} is infinite. Although leveraging truncated techniques relieves this problem at some extent \cite{kingma2015variational,neklyudov2017structured}, pathological behaviour still remains \cite{hron2018variational}, e.g., the resultant infinite KL divergence theoretically degenerate VD into a maximum likelihood estimation that fails to avoid over-fitting. More details for the theoretical justification can be found in~\cite{hron2017variational,neklyudov2017structured,hron2018variational}. Therefore, the performance of VD in preventing over-fitting need to be further improved.

In this section, we will introduce the details of the proposed VBD. In the following, we first introduce the proposed VBD framework. Then, we show a specifically designed hierarchical prior in VBD framework and prove that VBD with this prior can be consistent with Gaussian dropout for a fixed dropout rate.
\subsection{Variational Bayesian Dropout}
In contrast to a one-level prior in VD, we turn to propose a two-level hierarchical prior $p(\mathbf W,\gamma)=p(\mathbf W|\gamma)p(\gamma)$. This brings two aspects of advantages. Firstly, two kinds of very simple distributions in hierarchical structure can produce a much more complicated distribution, e.g., a hierarchical sparse prior~\cite{zhang2018cluster}, a zero-mean Gaussian distribution with variance depicted by Gamma distribution \cite{zhang2015reweighted}, and the super-Gaussian scale mixture model \cite{liu2018deblurring}. Thus the two-level structure increase the possible solution spaces for the proper and feasible prior to interpret Gaussian dropout. Secondly, the hierarchical structure enables the two-level prior separable in the involved Bayesian inference and thus is possible to simplify the Bayesian inference or makes the intractable inference tractable, which will be further clarified in the following subsections.

Similar to VD, the proposed VBD aims at optimizing a variational posterior to approximate the true posterior \cite{jordan1999introduction,Liu2018Frame}.
Unlike that VD only considers the posterior of network weights, we propose to model the joint posterior of both the network weights (e.g., $\mathbf{W}$) and the hyper-parameters (e.g., $\gamma$) in their prior as illustrated in Figure~\ref{VBD}.
We thus arrive the objective of Bayesian inference in the proposed VBD:
\begin{equation}\label{VarBaDrop}
 \min\limits_{\theta, \alpha, \gamma} D_{KL}({q }(\mathbf W, \gamma)||p(\mathbf W, \gamma|{\cal D})),
\end{equation}
where ${q }(\mathbf W, \gamma)$ denotes a corresponding variational joint posterior for $p(\mathbf W, \gamma|{\cal D})$. Note that when the hyper-parameter $\gamma$ is fixed, the proposed VBD will reduce to VD. Thus, the proposed VBD is a more general version of VD. According to variational Bayesian inference technique~\cite{jordan1999introduction}, we use the variational posterior $q(\mathbf W,\gamma)=q(\mathbf W)q(\gamma)$ to approximate the true posterior $p(\mathbf W, \gamma|{\cal D})$, and then the objective in Eq.~\eqref{VarBaDrop} can be reformulated as the variational lower bound of the marginal likelihood of data as:
\begin{equation}\label{vdh}
\mathop {\max }\limits_{\alpha, \theta,\gamma} {L_{\cal D}}(\alpha, \theta) - {D_{KL}}({q }(\mathbf W)||p(\mathbf W| {{\gamma}}))- {D_{KL}}({q }({{\gamma}})||p({{\gamma}})),
\end{equation}
where ${L_{\cal D}}$ is the expected log-likelihood term in Eq.~\eqref{vd}.
Derivations can be found in supplementary material.
For the proposed VBD, the key is to exploit a proper hierarchical prior $p(\mathbf{W}, \gamma)$ for supporting Gaussian dropout. In the following, we will provide such prior and discuss its advantages.

\subsection{The Proposed Hierarchical Prior}
\label{sec:g}
Inspired by the hierarchical prior in sparse Bayesian learning~\cite{tipping2001sparse}, we assume the network weights $\mathbf{W}$ come from a zero-mean Gaussian distribution. Then, a uniform hyper-prior is imposed on the variance of the Gaussian distribution to adjust the shape of the ultimate prior. When each element $W_{k,d}$ in $\mathbf{W}$ is independent identically distributed, the proposed two-level hierarchical prior $p(\mathbf{W}, \gamma)$ is formulated as:
\begin{equation}\label{iidgp}
\begin{split}
p(\mathbf W| \gamma) &= \prod\limits_{k = 1}^K {\prod\limits_{d = 1}^D p(W_{k,d}|\gamma_{k,d})}= \prod\limits_{k = 1}^K {\prod\limits_{d = 1}^D {\cal N}(W_{k,d}|0,\gamma_{k,d})},\\
p(\gamma) &= \prod\limits_{k = 1}^K {\prod\limits_{d = 1}^D p(\gamma_{k,d})} = \prod\limits_{k = 1}^K {\prod\limits_{d = 1}^D {\mathcal {U}}(\gamma_{k,d}|a,b)},
\end{split}
\end{equation}
where ${\mathcal {U}}(\gamma_{k,d}|a,b)$ denotes an uniform distribution with range $[a,b]$. By imbedding this prior into the proposed VBD framework in Eq.~\eqref{vdh}, we give the following theoretical result.
\begin{prop}
\label{theory:VBD}
With the prior $p(\gamma)$ in Eq.~\eqref{iidgp}, we employ mean-field variational approximation, viz., $q(\gamma)= \prod\nolimits_{k = 1}^K {\prod\nolimits_{d = 1}^D q(\gamma_{k,d})}$, and assume that $q(\gamma_{k,d})$ comes from a delta distribution. Then the proposed VBD framework in Eq.~\eqref{vdh} reduces to
\begin{equation}\label{vdh1}
\mathop {\max }\limits_{\alpha, \theta,\gamma} {L_{\cal D}}(\alpha, \theta) - {D_{KL}}({q }(\mathbf W)||p(\mathbf W| {{\gamma}})).
\end{equation}
\end{prop}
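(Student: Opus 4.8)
The plan is to show that, under the stated assumptions, the last term $D_{KL}(q(\gamma)||p(\gamma))$ in Eq.~\eqref{vdh} is a constant with respect to the optimization variables $(\alpha,\theta,\gamma)$; since dropping a constant does not move the maximizer, this immediately yields Eq.~\eqref{vdh1}. Observe first that the expected log-likelihood term $L_{\cal D}(\alpha,\theta)$ and the weight KL term $D_{KL}(q(\mathbf W)||p(\mathbf W|\gamma))$ of Eq.~\eqref{vdh} are exactly the two terms retained in Eq.~\eqref{vdh1}, so the whole argument reduces to analysing this single remaining KL term.

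Next I would use the mean-field factorization $q(\gamma)=\prod_{k,d}q(\gamma_{k,d})$ together with the product form of $p(\gamma)$ in Eq.~\eqref{iidgp} to write
\begin{equation}
D_{KL}(q(\gamma)||p(\gamma)) = \sum_{k=1}^{K}\sum_{d=1}^{D} D_{KL}\big(q(\gamma_{k,d})||p(\gamma_{k,d})\big),
\end{equation}
and split each summand as $D_{KL}(q||p)=\int q\log q-\int q\log p$. Let $\hat\gamma_{k,d}$ denote the point at which the delta $q(\gamma_{k,d})$ is concentrated; this is the effective free parameter that the outer maximization over $\gamma$ controls. For the cross-entropy part, since $p(\gamma_{k,d})=\mathcal{U}(\gamma_{k,d}|a,b)=1/(b-a)$ is constant on its support, the sifting property of the delta gives $-\int q(\gamma_{k,d})\log p(\gamma_{k,d})\,d\gamma_{k,d}=-\log p(\hat\gamma_{k,d})=\log(b-a)$ whenever $\hat\gamma_{k,d}\in[a,b]$. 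This value is independent of $\hat\gamma_{k,d}$; the support requirement merely confines the feasible $\gamma$ to $[a,b]^{K\times D}$, within which it stays fixed.

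The main obstacle is the entropy part $\int q\log q$, because the differential entropy of a Dirac delta is formally $-\infty$ and not well defined on its own. I would resolve this through the translation invariance of differential entropy: viewing the delta as the vanishing-width limit of a location family (e.g. a narrow distribution of width $\sigma$ centred at $\hat\gamma_{k,d}$), a change of variables shows that $\int q\log q$ depends only on $\sigma$ and never on the location $\hat\gamma_{k,d}$. Hence, although this quantity diverges as $\sigma\to0$, it is the \emph{same} constant for every admissible value of $\gamma$ and therefore contributes only an additive constant to the objective. Combining both parts, each per-coordinate KL term is constant in $\hat\gamma_{k,d}$, so $D_{KL}(q(\gamma)||p(\gamma))$ is constant in $(\alpha,\theta,\gamma)$ over the feasible set and can be removed from Eq.~\eqref{vdh} without changing the argmax, which gives Eq.~\eqref{vdh1}.
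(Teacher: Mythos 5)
Your proposal is correct and follows essentially the same route as the paper: factorize $D_{KL}(q(\gamma)\|p(\gamma))$ over the coordinates, observe that the cross-entropy against the uniform prior equals $\log(b-a)$ whenever the delta's location $\hat\gamma_{k,d}$ lies in $[a,b]$ (with $[a,b]$ taken large enough to contain the optimum, exactly as the paper assumes), and drop the resulting constant from the objective. If anything, your treatment of the delta's entropy term via a vanishing-width location family is \emph{more} careful than the paper's, which simply asserts $D_{KL}(q(\gamma_{k,d})\|p(\gamma_{k,d}))=\log(b-a)$ and thereby implicitly discards the delta's (divergent but location-independent) self-entropy — the very term you regularize explicitly.
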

Given the prior in Eq. \eqref{iidgp} and the variational posterior $q(\gamma_{k,d})$, ${D_{KL}}({q }({{\gamma}})||p({{\gamma}}))$ in Eq.~\eqref{vdh} will collapse to a constant and thus can be neglected in optimization.
Similar trick can be found in \cite{chan2002variational,babacan2012bayesian}.
Specifically, to simplify the representation, we assume $\gamma_{k,d}$ as a one-dimensional scalar. As defined in Eq.~\eqref{iidgp}, we have $p(\gamma_{k,d}) = {1}/{ (b - a)}$. Note that the delta distribution $q(\gamma_{k,d})$ either lies in or out of $[a, b]$, e.g., $\delta(\gamma_{k,d}-\gamma'_{k,d})$ and $\gamma'_{k,d}$ lies in or out of $[a,b]$. If $q(\gamma_{k,d})$ is out of $[a,b]$, there is $D_{KL}(q(\gamma_{k,d})||p(\gamma_{k,d})) = +\infty$. To avoid this case, $[a,b]$ is generally regarded as a large enough interval \cite{chan2002variational,babacan2012bayesian}. As a result, we arrive $D_{KL}(q(\gamma_{k,d})||p(\gamma_{k,d})) = \log (b - a)$, which is independent to the unknown variables $\alpha, \theta$ and $\gamma$ and thus can be neglected. Therefore, we do not need to set specific values for the hyper-parameters $a$ and $b$ in practice. The detailed proof can be found in supplementary material.

According to~\cite{kingma2015variational}, the key property of the log-uniform prior is to enable the KL divergence ${D_{KL}}({q }(\mathbf W)||p(\mathbf W))$ in Eq.~\eqref{vd} not depending on weight parameters $\theta$ as mentioned in {\bf{Proposition 1}}. With this property, learning $\theta$ in VD will be consistent with that in conventional Gaussian dropout for a fixed dropout rate $\alpha$. In the following, we will demonstrate that the proposed hierarchical prior Eq.~\eqref{iidgp} also shows a similar property in VBD framework Eq.~\eqref{vdh1}. To this end, we give the following theoretical result.

\begin{prop}
\label{theory}
Given the objective Eq.~\eqref{vdh1}, together with the prior Eq.~\eqref{iidgp} and the variational posterior Eq.~\eqref{gd2}, ${D_{KL}}({q }(\mathbf W)||p(\mathbf W|\gamma))= {\sum\nolimits_{k = 1}^K {\sum\nolimits_{d = 1}^D 0.5\log (1 + {\alpha^{-1}})}}$, which does not depend on weight parameters $\theta$.
\end{prop}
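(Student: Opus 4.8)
The plan is to reduce the claim to a single scalar computation by exploiting the full factorization of both distributions, and then to close the argument with a one-variable optimization over the hyper-parameter $\gamma$. First I would observe that the variational posterior in Eq.~\eqref{gd2} and the prior in Eq.~\eqref{iidgp} both factorize into independent one-dimensional terms indexed by $(k,d)$. Since the KL divergence between products of independent distributions is the sum of the per-coordinate KL divergences, it suffices to analyze a single term
$$D_{KL}\bigl(q(W_{k,d})\,\|\,p(W_{k,d}|\gamma_{k,d})\bigr),$$
where $q(W_{k,d}) = \mathcal{N}(W_{k,d}|\theta_{k,d},\alpha\theta_{k,d}^2)$ and $p(W_{k,d}|\gamma_{k,d}) = \mathcal{N}(W_{k,d}|0,\gamma_{k,d})$, and then sum over all $K\times D$ coordinates at the end.

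Next I would apply the standard closed-form expression for the KL divergence between two univariate Gaussians $\mathcal{N}(\mu_1,\sigma_1^2)$ and $\mathcal{N}(\mu_2,\sigma_2^2)$. Substituting $\mu_1=\theta_{k,d}$, $\sigma_1^2=\alpha\theta_{k,d}^2$, $\mu_2=0$, and $\sigma_2^2=\gamma_{k,d}$ gives the per-coordinate value
$$\tfrac{1}{2}\log\frac{\gamma_{k,d}}{\alpha\theta_{k,d}^2} + \frac{(1+\alpha)\theta_{k,d}^2}{2\gamma_{k,d}} - \tfrac{1}{2}.$$

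Here lies the subtle point, which I expect to be the main obstacle to state correctly. This per-coordinate KL divergence \emph{does} depend on $\theta_{k,d}$ for any fixed $\gamma_{k,d}$, so the $\theta$-independence asserted in the proposition cannot follow from the KL expression alone — in contrast to the log-uniform case of Proposition~\ref{prop1}, where the independence is structural. The resolution comes from reading the objective Eq.~\eqref{vdh1}, which maximizes $L_{\cal D}-D_{KL}$ jointly over $\alpha,\theta,\gamma$. Because $\gamma$ enters only through the KL term while the likelihood $L_{\cal D}$ is free of $\gamma$, maximizing the objective over $\gamma$ is equivalent to minimizing the KL over $\gamma$ coordinate-wise; thus the value of $\gamma_{k,d}$ at which the proposition is to be evaluated is the minimizer, not an arbitrary fixed constant.

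Finally I would carry out this one-variable minimization. Differentiating the scalar expression with respect to $\gamma_{k,d}$ and setting the derivative to zero yields the optimal prior variance $\gamma_{k,d}^{*} = (1+\alpha)\theta_{k,d}^2$, with the second derivative confirming a minimum. Substituting $\gamma_{k,d}^{*}$ back collapses the logarithmic term to $\tfrac{1}{2}\log\frac{1+\alpha}{\alpha}$ and the quadratic term to exactly $\tfrac{1}{2}$, which cancels the $-\tfrac{1}{2}$, leaving precisely $\tfrac{1}{2}\log(1+\alpha^{-1})$. The crucial feature is that $\gamma_{k,d}^{*}$ absorbs all the $\theta_{k,d}$-dependence, so that at the optimum each term depends on $\alpha$ only. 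Summing over all coordinates then produces the stated total $\sum_{k=1}^{K}\sum_{d=1}^{D} 0.5\log(1+\alpha^{-1})$, which is manifestly independent of the weight parameters $\theta$, completing the argument.
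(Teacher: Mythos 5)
Your proof is correct and follows essentially the same route as the paper's: factorize the KL coordinate-wise, apply the closed-form univariate Gaussian KL, optimize over $\gamma_{k,d}$ to obtain $\gamma^{*}_{k,d} = (1+\alpha)\theta^{2}_{k,d}$, and substitute back to get $0.5\log(1+\alpha^{-1})$ per coordinate. Your explicit observation that the $\theta$-independence holds only at the optimizing $\gamma$ rather than for an arbitrary fixed $\gamma$ is a correct reading of a point the paper leaves implicit when it sets the partial derivative in Eq.~\eqref{pf2} to zero.
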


\begin{proof}
Since the variational posterior in Eq.~\eqref{gd2} and the prior $p(\mathbf W| \gamma)$ in Eq.~\eqref{iidgp} are fully factorized, the KL-divergence ${D_{KL}}({q }(\mathbf W)||p(\mathbf W| {{\gamma}}))$ in \eqref{vdh1} can be decomposed into a sum as:
\begin{equation}\label{pf11}
{\sum\limits_{k = 1}^K {\sum\limits_{d = 1}^D {D_{KL}}({q}( W_{k,d})||p( W_{k,d}|\gamma_{k,d})) }}.
\end{equation}
Since both the prior ${q}( W_{k,d})$ and the posterior $p( W_{k,d}|\gamma_{k,d})$ follow Gaussian distributions, the KL divergence ${D_{KL}}({q }( W_{k,d})||p( W_{k,d}|\gamma))$ in Eq. \eqref{pf11} can be calculated as:
\begin{equation}\label{pf1}
\begin{split}
{D_{KL}}({q }( &W_{k,d})||p( W_{k,d}|\gamma))= \\
&{ { 0.5 \log (\frac{{\gamma_{k,d}}}{{\alpha {\theta}^2_{k,d}}}) + \frac{{\alpha {\theta}^2_{k,d}} +{\theta }^2_{k,d}  }{{2\gamma _{k,d}}}}} - 0.5.
\end{split}
\end{equation}
By introducing Eq.~\eqref{pf1} into Eq.~\eqref{vdh1}, we arrive:
\begin{small}
\begin{equation}\label{pf2}
\mathop {\max }\limits_{\alpha,\theta,\gamma} {L_{\cal D}}(\alpha,\theta)
  -  {\sum\limits_{k = 1}^K {\sum\limits_{d = 1}^D 0.5\log (\frac{{\gamma _{k,d}}}{{\alpha {\theta}^2_{k,d}}}) + \frac{{\alpha {\theta}^2_{k,d}} +{\theta}^2_{k,d}  }{{2\gamma _{k,d}}}}}.
\end{equation}
\end{small}
To find the optimal ${\gamma_{k,d}}$, referred to as ${\gamma^*_{k,d}}$, by setting the partial differential of Eq.~\eqref{pf2} with respect to ${\gamma _{k,d}}$ to zero, we have:
\begin{equation}\label{pf3}
{\gamma^*_{k,d}} = {\alpha {\theta}^2_{k,d}} +{\theta}^2_{k,d}.
\end{equation}
Replacing ${\gamma _{k,d}}$ in Eq.~\eqref{pf1} by Eq.~\eqref{pf3} completes the proof.
\end{proof}
In summary, with {\bf{Propositions} \ref{theory:VBD}} and {\bf{\ref{theory}}}, the final objective for the proposed VBD with the hierarchical prior can be given as:
\begin{equation}\label{pf4}
\mathop {\max }\limits_{\alpha, {\theta}} {L_{\cal D}}(\alpha, {\theta} )
- {\sum\limits_{k = 1}^K {\sum\limits_{d = 1}^D 0.5 \log (1 + {\alpha^{-1}})}}.
\end{equation}
Built on this loss function, we can see that: 1) The proposed hierarchical prior also meets the requirement such that ${D_{KL}}({q }(\mathbf W)||p(\mathbf W|\gamma))$ does not depend on weight parameters $\theta$. In other words, the proposed VBD with the hierarchical prior is consistent with Gaussian dropout when $\alpha$ is fixed. Hence, the proposed VBD can give Bayesian interpretation for Gaussian dropout. 2) The dropout rate parameter $\alpha$ also can be automatically learned as VD.

Note that we apply a uniform prior on $\gamma$ to be able to update $\gamma$. If $\gamma$ was treated as the variance hyper-parameter of the Gaussian prior, we could not update it in the proposed framework. This is because the prior could not see any data. Then, if we cannot update $\gamma$, we cannot produce Eq.~\eqref{pf3}, which means that we could not produce {\bf Propositions} \ref{theory}. To allow $\gamma$ to be updated, we impose a uniform prior on it and utilize delta variational posterior leading to {\bf Propositions 2}. This is also why many variational approximation methods (e.g., \cite{chan2002variational,babacan2012bayesian}) employ priors on hyper-parameters.

More importantly, the improper log-uniform prior in traditional VD induces ill-posed Bayesian inference, since it leads to infinite ${D_{KL}}({q }(\mathbf W)||p(\mathbf W))$ as mentioned in \cite{hron2018variational,hron2017variational,neklyudov2017structured}. By contrast, the proposed hierarchical prior in VBD gives well-posed Bayesian inference, since it produces reasonable and tractable ${D_{KL}}({q }(\mathbf W)||p(\mathbf W|\gamma))$ as shown in {\bf{Proposition} \ref{theory}}. Hence, the proposed hierarchical prior in VBD shows obvious superiority over the log-uniform prior in VD, and we argue that the proposed VBD framework with the hierarchical prior provides a full Bayesian interpretation for the success of Gaussian dropout in preventing over-fitting.

According to the discussion above, we can conclude as follows. 1) The proposed VBD is a more general VBD. VD focuses on incorporating one-level priors, while VBD can contain two-level priors as shown in Figure~\ref{VBD}. 2) VD claims that the improper log-uniform prior interprets Gaussian dropout, which fails to give full Bayesian interpretation for Gaussian dropout and suffers limited regularization capacity, while VBD proposes a proper prior to handle those drawbacks. In the following, we will show that these two main differences further lead to the apparent advantage of VBD in network compression compared with VD.

\section{Extension to Neural Network Compression}
Since VD can adaptively learn the dropout rate $p$ (or $\alpha$) from training dataset, it can be utilized for neural networks compression \cite{molchanov2017variational,neklyudov2017structured}. Inspired by this, in this section we turn to exploit the ability to apply the proposed VBD to neural networks compression under the frameworks proposed in \cite{molchanov2017variational,neklyudov2017structured}. In addition, we will also discuss the advantage of the proposed framework on networks compression.

\subsection{Compressing Weights}
We first extend the proposed VBD to compressing weights under the framework in \cite{molchanov2017variational}. Further details about the framework can be found in that paper. To this end, the proposed hierarchical prior Eq.~\eqref{iidgp} is used to model weights in a neural network.
For convenience, we replace the original $\alpha$ in Eq.~\eqref{gd2} to learn specific $\alpha$ for each weight, in which {\bf{Proposition} \ref{theory}} holds.
The distribution Eq.~\eqref{gd2} as a variational posterior, ${\cal N}(\theta_{k,d},{\alpha_{k,d} \theta^2_{k,d}})$, is used to approximate the true posterior. In this way, to learn $\theta_{k,d}$ and $\alpha_{k,d}$, the objective function in Eq.~\eqref{pf4} is rewritten as:
\begin{equation}\label{obj}
\mathop {\max }\limits_{\alpha, {\theta}} {L_{\cal D}}(\alpha, {\theta} )
- {\sum\limits_{k = 1}^K {\sum\limits_{d = 1}^D 0.5 \log (1 + {\alpha^{-1}_{k,d}})}}.
\end{equation}
Furthermore, since the natural gradient of $\theta_{k,d}$ faces with high variance, we follow the re-parameterization trick in \cite{molchanov2017variational}, viz, $\sigma^2_{k,d}={\alpha_{k,d} \theta^2_{k,d}}$.

The difference between the proposed method and the method in \cite{molchanov2017variational} is on the regularization term $-{D_{KL}}({q }(\mathbf W)||p(\mathbf W))$. It is equivalent to ${\sum\nolimits_{k = 1}^K {\sum\nolimits_{d = 1}^D -0.5 \log (1 + \alpha^{-1}_{k,d} )}}$ in the proposed VBD, while it is {\small $ {\sum\nolimits_{k = 1}^K {\sum\nolimits_{d = 1}^D {k_1}S({k_2} + {k_3}\log (\alpha_{k,d} )) \\ -0.5 \log (1 + \alpha^{-1}_{k,d}   )}}$} in the method proposed in \cite{molchanov2017variational}, where $k_1$, $k_2$ and $k_3$ are constant, and $S(\cdot)$ denotes the sigmoid function. The term $- 0.5 \log (1 + \alpha^{-1}_{k,d})$ in the latter is heuristically designed in~\cite{molchanov2017variational} to model the behaviour that the negative KL-divergence goes to a constant as log $\alpha_{k,d}$ goes to minus infinity. In contrast, the term $- 0.5 \log (1 + \alpha^{-1}_{k,d})$ in the proposed VBD is naturally derived from Bayesian inference as mentioned above. Further discussion on this term will be given in section \ref{analysis}.

\subsection{Structured Compressing}
Although the method in \cite{molchanov2017variational} can be employed to compress weights in a neural network, it fails to accelerate neural networks in the testing phase, since resultant compression is unstructured. Recently, structured Bayesian pruning in~\cite{neklyudov2017structured} employs VD to remove neurons and/or convolutional channels in convolutional neural networks for structured compression, resulting in satisfactory performance. Similar to VD, the proposed VBD also can be employed for structured pruning by constructing a dropout-like layer under the framework in \cite{neklyudov2017structured}.

Specifically, we construct a single dropout-like layer with an input matrix $f(\mathbf B)$ as follows:
\begin{equation}\label{gd12}
  \mathbf B' = f(\mathbf B) \circ {\mathbf W' },
\end{equation}
where ${\mathbf W'}$ denotes the dropout noise and $f(\cdot)$ denotes the activation function. The output of this layer $\mathbf B'$ is of the same size as the input $\mathbf B$, and would serve as an input matrix for the following layer. Similar to that in the previous section, we enforce the proposed hierarchical prior Eq.~\eqref{iidgp} on ${\mathbf W' }$, and imposes the variational posterior, ${\cal N}(\theta_{m,d},{\alpha_{m,d} \theta^2_{m,d}})$ (For convenience, we re-utilize the same symbol $\alpha_{m,d}$ and $\theta_{m,d}$ that is originally used to model weights in the previous section, however, they are served to $\mathbf W'$ in this section). Again, the objective function Eq.~\eqref{obj} is used for learning $\theta_{m,d}$ and $\alpha_{m,d}$, and the re-parameterization \cite{molchanov2017variational} is adopted.

\subsection{Analysis}
\label{analysis}
In these two kinds of compression schemes above, effective compression depends on high dropout rate, e.g., $\alpha_{k,d} \to \infty$ or $\alpha^{-1}_{k,d} \to 0$, which corresponds to a binary dropout rate that approaches $p = 1$. This effectively means that the corresponding weight or neuron is always ignored and can be removed \cite{molchanov2017variational}. In this subsection, we will show that the proposed variational Bayesian dropout explicitly imposes a sparse regularization for optimizing $\alpha^{-1}_{k,d}$, and thus is able to effectively compress the deep neural networks. To this end, we firstly rewrite the objective Eq.~\eqref{obj} as:
\begin{equation}\label{fobj}
\mathop {\min }\limits_{\alpha, {\theta}} {\sum\limits_{k = 1}^K {\sum\limits_{d = 1}^D 0.5 \log (1 + {\alpha^{-1}_{k,d}})} -L_{\cal D}(\alpha, {\theta} ) }.
\end{equation}
The expected log-likelihood term $L_{\cal D}$ can be viewed as the data fit-term for $\alpha^{-1}_{k,d}$ and the remainder derived from KL divergence works as the regularization term. For such a regularization term, we have the following theoretical results.
\begin{prop}\label{corollary}
The regularization term $0.5 \log (1 + {\alpha^{-1}_{k,d}})$ in Eq.~\eqref{fobj} is a concave, non-decreasing function on the domain $[0, +\infty)$, with respect to ${\alpha^{-1}_{k,d}}$.
\end{prop}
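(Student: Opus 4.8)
The plan is to treat the summand as a function of a single real variable. First I would introduce the substitution $x = \alpha^{-1}_{k,d}$ so that the regularization term becomes $g(x) = 0.5\log(1+x)$ regarded as a scalar function on $[0,+\infty)$. Both claimed properties (monotonicity and concavity) are properties of $g$ alone and do not involve the index $(k,d)$ or the other variables $\theta$, so it suffices to analyze this one function of $x$.

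To establish that $g$ is non-decreasing, I would compute the first derivative $g'(x) = \frac{0.5}{1+x}$ and observe that for every $x \in [0,+\infty)$ the denominator $1+x$ is strictly positive, whence $g'(x) > 0$ on the entire domain. Thus $g$ is strictly increasing, and in particular non-decreasing, as asserted.

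For concavity, I would differentiate once more to obtain $g''(x) = -\frac{0.5}{(1+x)^2}$, which is strictly negative for all $x \in [0,+\infty)$ because the squared denominator is positive. Since a twice-differentiable function whose second derivative is negative throughout an interval is concave on that interval, this yields the concavity claim.

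The only point requiring a moment of care — and the closest this statement comes to having an obstacle — is checking that $g$ is well-defined and twice differentiable up to and including the boundary point $x=0$. This holds because $1+x \ge 1 > 0$ on the whole domain, so neither $g'$ nor $g''$ develops a singularity there. With this verified, the two sign computations for $g'$ and $g''$ deliver both assertions immediately.
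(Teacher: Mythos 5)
Your proof is correct: the derivative computations $g'(x)=0.5/(1+x)>0$ and $g''(x)=-0.5/(1+x)^2<0$ on $[0,+\infty)$ establish both monotonicity and concavity, and this elementary calculus argument is exactly the standard one for this statement (the paper itself states the proposition without proof in the main text, treating it as immediate). Your extra care about well-definedness at the boundary $x=0$ is sound but unproblematic, since $1+x\ge 1$ there.
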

\noindent According to \cite{pmlrchen17d}, introducing such a regularization term $0.5 \log (1 + {\alpha^{-1}_{k,d}})$ into the objective is beneficial to promote the sparsity of the solution. Therefore, with optimizing $\alpha^{-1}_{k,d}$, we can obtain sparse $\alpha^{-1}_{k,d}$. Note that this regularization term coincides with that in \cite{dai18d} which is motivated by information bottleneck principle. In contrast, the regularization term in this study stems from variational Bayesian inference. Besides, this sparsity-promoting regularization comes from a result of the particular variational approximation, which is different from previous methods with sparse priors to compress network, e.g., \cite{louizos2017bayesian,ghosh2018structured}.
\section{Experiments}
In this section, we conduct experiments on classification task to demonstrate the effectiveness of the proposed variational Bayesian dropout in preventing over-fitting. Then, we further evaluate its performance in neural network compression including weight compression and structured compression. Note that neural network compression can also imply the ability of preventing over-fitting in term of the final test error.
\subsection{Classification}
{\textbf{MNIST Dataset}} Following the settings in~\cite{kingma2015variational}, we first take the hand-written digit classification task on MNIST dataset as a standard benchmark to evaluate the performance of dropout methods in preventing over-fitting. On this task, we compare the proposed variational Bayesian dropout with other five existing dropout methods, namely no dropout, standard dropout with Bernoulli noise \cite{hinton2012improving}, dropout with Gaussian noise \cite{srivastava2014dropout}, Gaussian dropout \cite{wang2013fast} and VD \cite{kingma2015variational}, and concrete dropout \cite{gal2017concrete} which is able to learn adaptive dropout rate. We follow the network architecture in~\cite{srivastava2014dropout}, which adopt a fully connected neural network consisting of 3 hidden layers with different number of units and rectified linear units. For experimental setting, all networks are trained for 50 epochs. More details about the network architecture can be found in supplementary material.

\begin{table}[!ht]
\center
\small
\begin{tabular}{c|c|c|c|c|c}
\hline
  {\footnotesize{ The number of units}}& 100 & 340 &580 & 820 & 1060  \\
\hline\hline
  No Dropout                & 1.80 & 1.77 & 1.78 & 1.78 & 1.71  \\
  Dropout, Bernoulli        & 1.69 & 1.68 & 1.67 & 1.67 & 1.63  \\
  Dropout, Gaussian         & 1.70 & 1.66 & 1.68 & 1.69 & 1.62  \\
  Gaussian Dropout          & 1.66 & 1.64 &1.71  & 1.65 & 1.64   \\
  Concrete Dropout          & 2.39 & 1.61 & 1.55 & 1.54 & 1.51 \\
   VD                      &  1.69 & 1.62 & 1.67 &1.62 & 1.63     \\
   Ours   & {\bf{1.56}}& {\bf{1.53}}& {\bf{1.53}}& {\bf{1.52}}& {\bf{1.45}}   \\
\hline
\end{tabular}
\caption{Test error (\%) on the MNIST dataset. \label{dropouta} }
\end{table}

Table \ref{dropouta} shows the test error for all methods with various choices of the number of units per layer. We observe that although VD can adaptively learn dropout rate during training, it only obtains slightly better performance compared with conventional dropout methods with fixed dropout rate, including dropout with Bernoulli noise and dropout with Gaussian noise. This is because the Bayesian inference in VD with the improper log-uniform prior is ill-posed, which hence ultimately restricts the capacity in preventing over-fitting as discussed before. Conversely, the proposed VBD with adaptive dropout rate gains impressive performance, which is better than that of the standard dropout as well as VD. This profits from that the proposed hierarchical prior is a proper prior and thus it can appropriately regularize the network weights in the Bayesian inference. Besides, the proposed VBD is superior to concrete dropout.

{\textbf{CIFAR-10 Dataset}} We further compare the proposed method with dropout with Bernoulli noise \cite{hinton2012improving}, dropout with Gaussian noise \cite{srivastava2014dropout}, VD \cite{kingma2015variational}, concrete dropout \cite{gal2017concrete} on CIFAR-10 dataset. We also compare with adversarial dropout recently proposed in \cite{AAAI1816322}. In this case, we follow the network architecture with different $scale$ in~\cite{kingma2015variational} for experimental setting, all networks are trained for 100 epochs. More details for the network architecture can be found in supplementary material.
\begin{table}[!ht]
\center
\begin{tabular}{c|c|c|c}
\hline
  $scale$           & 1 & 1.5 & 2 \\
\hline\hline
  No Dropout                & 48.34 & 48.13 & 47.56  \\
  Dropout, Bernoulli        & 45.55 & 43.38 & 42.69  \\
  Dropout, Gaussian         & 46.52 & 43.50 & 42.79  \\
  Adversarial Dropout       & 45.50 & 42.35 & 42.50  \\
  Concrete Dropout          & 43.47 & 42.68 & 42.19  \\
  VD                & 44.12 & 42.99 & 42.60  \\
  Ours   &        {\bf{39.50}}& {\bf{38.88}}& {\bf{38.52}} \\
\hline
\end{tabular}
\caption{Test error (\%) on the CIFAR-10 dataset. \label{dropoutb} }
\end{table}

Figure \ref{dropoutb} shows the test error for the all methods with different $scale$. We can see that concrete dropout only performs on par with traditional dropout methods and VD. In addition, we found that due to its negative effect of the improper log-uniform prior, VD only provides comparable results to those methods with fixed dropout rate. In contrast, profiting from the proper hierarchical prior, the proposed variational Bayesian dropout performs impressively well in preventing over-fitting. For example, when $scale=2$, compared with no dropout method, the proposed method reduces the test error by 9.04\%. The improvement is even up to 4.08\% when compared with VD.

{\textbf{SVHN Dataset}} In this case, we follow the network architecture with $scale=1$ used in experiments on {\textbf{CIFAR-10 Dataset}}, and all networks are trained for 100 epochs Table \ref{tab:svhn} shows the results of test error (\%) in SVHN dataset. Since traditional methods for setting dropout rate, such as Grid-search based methods, are computationally expensive. We set the dropout rate to be 0.5 for all layers of the network in our experiments for simplicity. Under this simple setting, traditional dropouts, e.g., Dropout with Bernoulli noise and Dropout with Gaussian noise, are slightly superior to no dropout. Again, due to the improper log-uniform prior, VD only provides comparable results to traditional dropout with fixed dropout rate. Conversely, compared with these fixed dropout rate based methods, the proposed method with adaptive dropout rate gets the best test error, 17.46\%.

\begin{table}[t]
\center
\begin{tabular}{c|c}
\hline
   Methods & Error (\%)  \\
\hline\hline
  No dropout & 22.01  \\
  Dropout, Bernoulli &  20.31  \\
  Dropout, Gaussian   & 20.22   \\
  Concrete Dropout     & 18.95  \\
  Adversarial Dropout   & 18.66    \\
   VD    &19.74     \\
    Ours   & {\bf{17.46}}  \\
\hline
\end{tabular}
\caption{Test error on the SVHN dataset. \label{tab:svhn} }
\end{table}

\subsection{Network compression}
{\textbf{Compressing Weights}} In this part, we turn to evaluate the effectiveness of the proposed method in weight compression in neural networks for classification on the MNIST dataset. Here we adopt two kinds of basic neural networks, the fully-connected LeNet-300-100 \cite{lecun1998gradient} and a convolutional LeNet-5-Caffe \footnote{https://github.com/BVLC/caffe/tree/master/examples/mnist}. We compare the proposed method with four network compression methods, including Pruning \cite{han2015learning}, Dynamic Network Surgery (DNS) \cite{guo2016dynamic}, Soft Weight Sharing (SWS) \cite{ullrich2017soft} and VD \cite{molchanov2017variational}. In the experiments, we strictly follow the settings in~\cite{molchanov2017variational}.

\begin{table*}[t]
\centering
\begin{tabular}{c|c|c|c||c|c|c}
\hline
  &\multicolumn{3}{c||}{LeNet-300-100} & \multicolumn{3}{c}{LeNet-5-Caffe}  \\
\hline\hline
   Methods & Error \% & Sparsity per Layer \% & $\frac{|\mathbf W|}{|\mathbf W_{\neq 0}|}$  & Error \% & Sparsity per Layer \% & $\frac{|\mathbf W|}{|\mathbf W_{\neq 0}|}$\\
\hline\hline
  Original & 1.64 &              & 1          &           0.8 &              & 1\\
  Pruning  & {\bf{1.59}} & 92.0-91.0-74.0& 12 &           0.77 & 34-88-92.0-81 & 12 \\
    DNS    & 1.99 & 98.2-98.2-\bf{94.5}& 56   &           0.91 & {\bf{86}}-97-99.3-96 & 111\\
    SWS    & 1.94 &               & 23        &            0.97 &               & 200\\
    VD     & 1.94 & {\bf{98.9}}-97.2-62.0& 68 &         {0.75} & 67-{\bf{98}}-{\bf{99.8}}-95 & 280\\
    Ours (low test error)   & 1.67 & 98.7-97.4-87.4& 66        &         {\bf{0.74}} & 69-{\bf{98}}-99.7-95 & 198\\
    Ours (high sparsity)   & 1.76 & {\bf{98.9}}-{\bf{98.1}}-90.9& \bf{81}  & 0.81 & 65-\bf{98}-\bf{99.8}-\bf{97}&  \bf{290}\\
\hline
\end{tabular}
\caption{Compressing weights in LeNet-300-100 and LeNet-5-Caffe. $|\mathbf W|$ and $|\mathbf W_{\neq 0}|$ denote the number of weights, the number of weights with nonzero value, respectively. The sparsity per layer is computed by the rate between the number of weights with zero value and the number of weights in each layer.\label{tab:weight} }
\end{table*}

Table \ref{tab:weight} shows the results of compressing weights in LeNet-300-100. Compared with traditional VD, with the similar sparsity $\frac{|\mathbf W|}{|\mathbf W_{\neq 0}|}=66$, the proposed method gets 1.67\% test error that is better than the result of traditional VD 1.94\%.  Further, although traditional VD has better results in compression ratio $\frac{|\mathbf W|}{|\mathbf W_{\neq 0}|}= 68$, compared with Pruning, DNS and SWS, it also reports higher test error 1.94\% due to its limited ability to avoid over-fitting. On the contrary, since the proposed VBD provides a real Bayesian interpretation for dropout, it can effectively prevent over-fitting and gains the better test error 1.76\% as well as higher compression ratio $\frac{|\mathbf W|}{|\mathbf W_{\neq 0}|}=81$.

Table \ref{tab:weight} also shows the results of compressing weights in LeNet-5-Caffe. We can see that compared with Pruning, DNS and SWS, traditional VD reports better compression ratio $\frac{|\mathbf W|}{|\mathbf W_{\neq 0}|}=280$ and test error 0.75\%. Further, compared with the all methods, the proposed method obtains the best compression ratio $\frac{|\mathbf W|}{|\mathbf W_{\neq 0}|}$, without the loss of test error.

{\textbf{Structured Compressing}} We here test the performances of the proposed method on structured compressing for neural networks. The used architecture of neural networks is a fully-connected LeNet-500-300 and a convolutional LeNet-5-Caffe. 
We compare VD \cite{molchanov2017variational}, SSL \cite{wen2016learning} and SBP \cite{neklyudov2017structured}.

\begin{table}[!htbp]
\centering
\footnotesize
 \caption{Compressing neurons. \label{str:weight} }
 \begin{tabular}{c||c|c|c}
\hline
Models & Methods & Error \% & Neurons per Layer   \\
\hline\hline
\multirow{5}{*}{LeNet-500-300} &Original & 1.54 &    784 - 500 - 300 - 10   \\
                    &VD   & 1.57 &    537 - 217 - 130 - 10   \\
                    &SSL    & 1.49 &    434 - 174 - 78\,\,\, - 10    \\
                    &SBP    & 1.55 &    245 - {\bf{160}} - {\bf{55}}\,\,\, - 10    \\
                    &Ours   & {\bf{1.35}} &    {\bf{179}} - {\bf{160}} - 60\,\,\, - 10    \\
\hline\hline
 \multirow{5}{*}{LeNet-5-Caffe} &Original & 0.80 & 20 - 50 - 800 - 500 \\
                                &VD     & 0.75  & 17 - 32 - 329 - 75\,\,\, \\
                                &SSL    & 1.00 & {\bf{3\,\,\,}} - {\bf{12}} - 800 - 500  \\
                                &SBP     & 0.86& {\bf{3\,\,\,}} - 18 - 284 - 283 \\
                                &Ours   & \bf{0.66} & 16 - 34 - {\bf{123}} - {\bf{62}\,\,\,}     \\
\hline
 \end{tabular}
\end{table}
Table \ref{str:weight} shows the results of compressing neurons in LeNet-500-300. As discussed in \cite{trippe2018overpruning,hron2018variational}, VD based SBP corresponds to maximum likelihood estimation, which leads to overly pruning neurons, and hence SBP gets the higher test error 1.55\%. The proposed method not only prunes the most neurons but also gains the lowest test error. By contrast, due to the improper log-uniform prior, VD produces the highest test error and only slightly compresses neurons.

Table \ref{str:weight} also shows the results of neurons compression in LeNet-5-Caffe for all methods. We find that the proposed method gains the lowest test error as well as the least neurons, e.g., the test error of the proposed method is even up to 0.66\%. In addition, in LeNet-5-Caffe, the first two layers are convolutional layers, and the following two layers are fully-connected layers. Differing from SSL and SBP that mainly focus on pruning neurons in convolutional layers, the result shows that the proposed method prefers to pruning neurons in the fully-connected layers. This means that the proposed method emphasizes feature extraction.

\section{Conclusion}
In this study, we propose a new generalization (\ie, VBD) for Gaussian dropout to address the drawback of VD brought by the improper log-uniform prior, e.g., the ill-posed inference of posterior over network weights.
Towards this goal, we exploit a hierarchical prior to the network weights and propose to infer the joint posterior over both these weights and the hyper-parameters defined in their first-level prior. Through implementing the hierarchical prior as a zero-mean Gaussian distribution with variance sampled from a uniform hyper-prior, the proposed VDB can cast the network training and the dropout rate estimation into a joint optimization problem. In VBD, the hierarchical prior is a proper prior which enables the inference of posterior to be well-posed, thus not only leading to a full Bayesian justification for Gaussian dropout but also improving regularization capacity. In addition, we also show that the proposed VBD can be seamlessly applied to network compression. In the experiments on both classification and network compression tasks, the proposed VBD shows superior performance in terms of regularizing network training.

VBD is a general dropout framework, which exploits a promising direction for dropout, \ie, investigating hierarchical priors of network weights. The Gaussian-uniform prior in this study is one feasible choice, but no means the only option. In the future, more effort will be made to exploit other possible choices to regularize the training process of deep neural networks better.

{{{\bf Acknowledgements:} This work was partially supported by National Key R\&D Program of China (2018YFB0904200), National Natural Science Foundation of China (No. 61672024, 61170305 and 60873114) and Australian Research Council (DP140102270 and DP160100703). Yuhang was supported by a scholarship from the China Scholarship Council.}}

{
\bibliographystyle{ieee}
\bibliography{mylibrary}
}

\end{document}